
\documentclass{article}

\usepackage{microtype}
\usepackage{graphicx}
\usepackage{subfigure}
\usepackage{booktabs} 
\usepackage{multirow}
\usepackage{amsmath,amssymb,amsthm} 

\newtheorem{proposition}{Proposition}
\newcommand{\bsb}[1]{\boldsymbol{#1}}




\usepackage[accepted]{icml2021}

\icmltitlerunning{Commutative Lie Group VAE}

\begin{document}

\twocolumn[
\icmltitle{Commutative Lie Group VAE for Disentanglement Learning}




\begin{icmlauthorlist}
\icmlauthor{Xinqi Zhu}{syd}
\icmlauthor{Chang Xu}{syd}
\icmlauthor{Dacheng Tao}{jd}
\end{icmlauthorlist}

\icmlaffiliation{syd}{School of Computer Science, Faculty of Engineering, The University of Sydney, Australia}
\icmlaffiliation{jd}{JD Explore Academy, JD.com, China}

\icmlcorrespondingauthor{Xinqi Zhu}{xzhu7491@uni.sydney.edu.au}

\icmlkeywords{Machine Learning, ICML, Disentangled Representation Learning}

\vskip 0.3in
]



\printAffiliationsAndNotice{}  

\begin{abstract}
    We view disentanglement learning as discovering an underlying structure
    that equivariantly reflects the factorized variations shown in data.
    Traditionally, such a structure is fixed to be a vector space
    with data variations represented by translations along
    individual latent dimensions.
    We argue this simple structure is suboptimal since
    it requires the model to learn to discard the properties
    (e.g. different scales of changes, different levels of abstractness)
    of data variations, which is an extra work than equivariance learning.
    Instead, we propose to encode the data variations with groups,
    a structure not only can equivariantly represent variations,
    but can also be adaptively optimized to preserve the properties
    of data variations.
    Considering it is hard to conduct training on group structures,
    we focus on Lie groups and adopt a parameterization using
    Lie algebra. Based on the parameterization,
    some disentanglement learning constraints are naturally derived.
    A simple model named Commutative Lie Group VAE is introduced
    to realize the group-based disentanglement learning.
    Experiments show that our model can effectively learn
    disentangled representations without supervision,
    and can achieve state-of-the-art performance without extra constraints.
\end{abstract}

\section{Introduction}
\label{sec:introduction}

Equivariance has been widely considered as one of the most
important desiderata in representation learning
\cite{capsule1,CohenLieGroups,CohenGroupCNN,higgins2018definition}.
A representation is equivariant if the
transformations on the input data can be reflected
by transformations on the representation:
\begin{align}
    \sigma(g(x)) = g'\sigma(x), \label{eq:equivariance}
\end{align}
where $\sigma$ denotes the representation function, and $g, g'$ represent
the same transformation acting on the data space and
representation space respectively.
An invariant representation is achieved if $g'$ becomes
the identity transformation.

Unsupervised disentangled representation learning is to discover the
factorizable variations shown in data and encode them
with individual dimensions in representations \cite{Higgins2017betaVAELB,
Kim2018DisentanglingBF,chen2018isolating,
Burgess2018UnderstandingDI,Jeong2019LearningDA,
Zhao2017LearningHF,Li2020ProgressiveLA,Karras2020ASG}.
A commonly applied but less emphasized assumption is that
\emph{the disentangled representations are also equivariant},
because the semantic (or attribute) changes are reflected by
the shifting of different dimensions in the representations space.
The difficulty of this task is to learn the representation that
preserves such an equivariance \emph{without supervision}.

Existing unsupervised disentanglement methods usually learn the
equivariance mapping based on a fixed vector space.
We argue that this modeling
is suboptimal,
because it requires a model to do two tasks at the same time:
(1) to discover equivariance; and
(2) to ignore the \emph{properties} in data variations to obey a
fixed vector-space embedding.
Here the \emph{properties} in variations can include
different levels of abstractness (e.g. low-level vs semantic attributes),
different scales of variations (e.g. significant vs subtle changes),
certain structures (e.g. cyclic),
relation between variations (e.g. conditional relation),
etc.
Our hypothesis is that the equivariance is more likely to be
learned with an adaptive equivariant structure which is used to
fit the data variations.
By this means,
the model is relieved from doing a combined difficult learning task
and can thus focus on learning equivariance.
In this paper, group structures are adopted for this task,
and a conceptual illustration is shown in Fig. \ref{fig:intro_impress}.
There exist some previous works that also adopt group structures to
learn disentangled representations
\cite{higgins2018definition,NEURIPS2019_36e729ec,
Quessard2020LearningGS,painter2020linear}. However, these methods
use predefined (known) group structures,
which are neither adaptive nor generalizable.
Additionally, these models cannot be learned without supervision.
To the best of our knowledge, this is the first work to learn
unsupervised disentangled representations based
on adaptive group structures, which can be successfully applied
to complex datasets.

\begin{figure}[t]
\begin{center}
   \includegraphics[width=\linewidth]{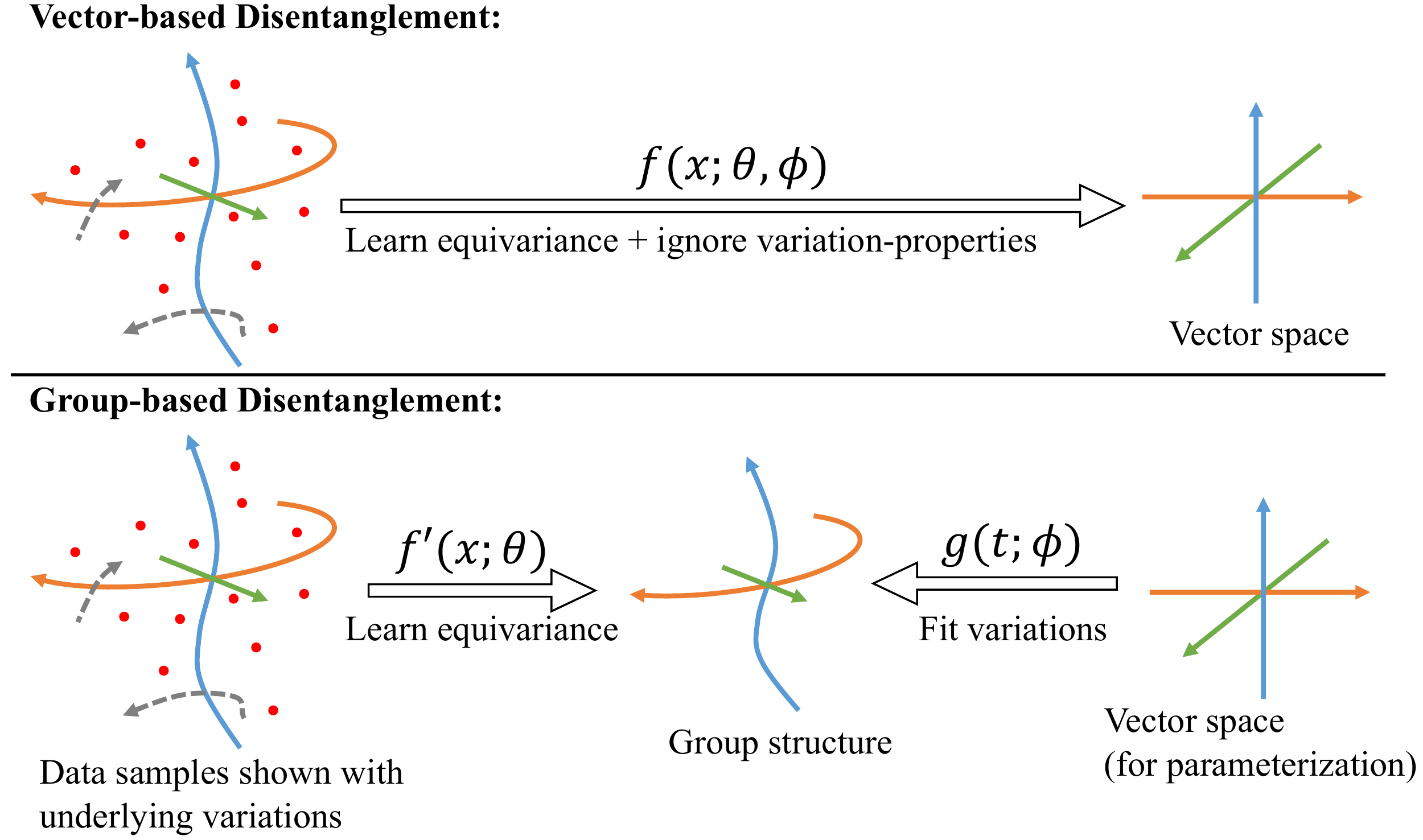}
\end{center}
    \caption{The upper part shows the classical disentanglement by learning
    equivariance with a vector space. On the left we show data samples
    in red points to be the static observations
    of some underlying variations.
    The underlying variations are shown as arrows in different colors with
    grey dash-arrows denoting noise that cannot be disentangled.
    The shape and length of arrows represent the properties of variations.
    An encoder $f$ is trained to simultaneously
    learn equivariance ($\theta$) and
    ignore variation-properties ($\phi$).
    The lower part shows our proposed group-based disentanglement.
    Our framework separates the whole work into two parts by
    training an adaptive group structure to fit the variations ($g$)
    and an encoder to focus on learning equivariance ($f'$).}
\label{fig:intro_impress}
    \vspace{-10pt}
\end{figure}

In this paper,
we propose to use groups \emph{as} representations to
achieve disentanglement without supervision.
We focus on Lie groups for modeling continuous variations
in data, and adopt the Lie algebra parameterization to
enable practical training.
Based on the parameterization,
decomposition constraints like one-parameter
subgroup decomposition and Hessian
Penalty can be naturally derived to encourage disentanglement.
As for the realization, we first introduce a simple variant
of VAE called bottleneck-VAE, which is constructed from a new lower bound
of $\text{log}\,p(x)$.
Based on the bottleneck-VAE,
the proposed Commutative Lie Group VAE can be naturally implemented
by incorporating Commutative Lie Group constraints.
Our models are validated in the unsupervised
disentanglement learning setting on various datasets.
Without extra disentanglement constraints (like statistical independence),
our group-based model achieves state-of-the-art
on DSprites and 3DShapes datasets.

\section{Related Work}
\label{sec:related_work}
\textbf{Disentanglement Learning.}
Supervised disentanglement learning has been mainly tackled as
conditional generation based on labeled-attributes \cite{Reed2014LearningTD,
Kingma2014SemisupervisedLW,Dosovitskiy2014LearningTG,Kulkarni2015DeepCI,
Yan2016Attribute2ImageCI,Lample2017FaderNM}.
After the introduction of InfoGAN \cite{Chen2016InfoGANIR} and
$\beta$-VAE \cite{Higgins2017betaVAELB},
the task of unsupervised disentanglement
learning has gained increasingly interest in recent years.
Most of the learning models are based on the VAE framework,
imposing constraints to learn continuous latent variables
like statistical independence \cite{Burgess2018UnderstandingDI,
Kumar2017VariationalIO,Kim2018DisentanglingBF,chen2018isolating},
hierarchical biases \cite{Chen2016InfoGANIR,Li2020ProgressiveLA}.
Some other models focus on adapting discrete variables into the
disentanglement learning \cite{Dupont2018LearningDJ,Jeong2019LearningDA}.
Another branch of methods is based on InfoGAN, modeling
the informativeness between the latent codes and the images,
e.g. IB-GAN \cite{Jeon2018IBGANDR}, InfoGAN-CR \cite{Lin2019InfoGANCRDG},
VPGAN \cite{VPdis_eccv20}, and PS-SC GAN \cite{Xinqi_cvpr21}.
These models all try to capture data variations with individual
latent variables in a vector space while we propose to
leverage an adaptive group structure to achieve boosted disentanglement
from a novel perspective.

\textbf{Symmetry-Based Disentanglement Learning.}
In addition to the common definitions of disentanglement learning
\cite{Bengio2012RepresentationLA,Eastwood2018AFF,Do2020Theory},
Higgins et~al. \cite{higgins2018definition}
propose another formal definition of disentanglement
learning based on group theory.
In \cite{NEURIPS2019_36e729ec,Quessard2020LearningGS}, concrete models are
proposed to learn
a symmetry-based representation, but both methods rely on the
paired data with action labels revealed to be trained.
In \cite{painter2020linear}, a reinforcement learning method
is incorporated to estimate the actions, but paired data samples of
elemental transformations are still required for training.
These models indeed attempt to capture symmetry-variations with groups,
but the group structures are predefined and the ground-truth factorization
is given, leading to the inability for unsupervised disentanglement learning.

\textbf{Group-Equivariant Convolutions.}
Inspired by the successful application of translation equivariance in
convolutional neural networks \cite{6795724}, there have been
a large number of works trying to bring other symmetry groups
into convolutional neural networks to improve data efficiency and
generalization, e.g. planar rotations
\cite{CohenGroupCNN,DielemanGroup,8100241,hoogeboom2018hexaconv}.
spherical rotations \cite{SphericalCNN,Worrall_2018_ECCV},
scaling \cite{NEURIPS2019_f04cd739,Sosnovik2020Scale-Equivariant},
general groups \cite{Bekkers2020B-Spline},
and groups on other data structures
\cite{finzi2020generalizing,fuchs2020se3transformers}.
Unlike our work, these works learn representations with certain symmetries
that are predefined and usually semantic-agnostic to improve
the data efficiency and generalization in neural networks.
On the contrary, we focus on \emph{discovering} variations that
can be equivariantly represented and disentangled, with group structures
unknown and representing semantic variations.

\section{Preliminaries of Groups}
\label{sec:preliminaries}
This paper adopts some essential concepts from
group theory which we exhibit here.

\textbf{Group.} A group is a set $G$ with a binary operation $\circ$
being the group multiplication. A group satisfies the following axioms:
\emph{Closure:} For all $h, g \in G$ we have $h \circ g \in G$;
\emph{Identity:} There exists an identity element $e \in G$ such that
$\forall g \in G, e \circ g = g \circ e = g$;
\emph{Inverse:} For each $g \in G$, there exists an inverse element
$g^{-1} \in G$ such that $g \circ g^{-1} = g^{-1} \circ g = e$;
\emph{Associativity:} For all $g,h,i \in G$, we have
$(g \circ h) \circ i = g \circ (h \circ i)$.
In this paper, we consider
matrix groups under matrix multiplication
(a subgroup of the general linear group $GL(V)$, where $V$ denotes
the vector space on which the matrix group is acting).

\textbf{Lie Group and Lie algebra.}
A Lie group is a group with a continuous (and smooth)
structure. In this paper, we consider matrix Lie groups, which are
Lie groups realized as groups of matrices.
Every Lie group is associated with a Lie algebra $\mathfrak{g}$, a vector
space that is the tangent space of the Lie group at the identity element.
A Lie algebra can be parameterized with a basis $\{A_i\}_{i=1}^m$,
where every element in $\mathfrak{g}$ is written as
$A = t_1 A_1 + ... + t_m A_m$ with $t_i$ being the coordinates.
Elements in a Lie algebra can be mapped to the corresponding Lie group
using the matrix exponential map
$\text{exp}: \mathfrak{g} \rightarrow G$.

\textbf{One-parameter subgroup.}
A one-parameter subgroup is (the image of) a
function $h: \mathbb{R} \rightarrow GL(V)$
if: (1) $h$ is continuous; (2) $h(0) = e$; (3) $h(t + s) = h(t)\circ h(s)$
for all $t, s \in \mathbb{R}$.
A Lie group with a 1-dim Lie algebra ($\{A_i\}_{i=1}^{m} = \{A\}$)
is a one-parameter subgroup: $h = \text{exp}(tA), t \in \mathbb{R}$.

\textbf{Group representation.} In group theory, group representations describe
abstract groups as linear transformations of vector spaces.
However, since we restrict our attention on matrix Lie groups and
also due to its potential confusion with the term \emph{representation}
in machine learning, we discard its group-theory definition in this paper.
Instead, we use the phrase \emph{group representation}
to describe a representation that is learned with a group structure.

\section{Method}
We propose group representations
for disentanglement learning in Sec. \ref{sec:group_as_code}.
To enable practical learning on group representations,
we introduce the Lie Algebra parameterization in Sec. \ref{sec:lie_param}.
In Sec. \ref{sec:lie_vae}, we first introduce a variant of VAE
called bottleneck-VAE, and then introduce our Commutative Lie Group VAE by
integrating the Commutative Lie Group constraints.

\subsection{Group Representation}
\label{sec:group_as_code}
Generally in an equivariance representation (as in Eq. \ref{eq:equivariance}),
there exist two separate components to be learned: the group
(of transformations) and the representation (of data).
This is not desirable since usually we need paired data to train the
group as we need supervision for transformations
\cite{NEURIPS2019_36e729ec,Quessard2020LearningGS}.
However, if all the variations shown in a dataset can be assumed
to be represented by a
group (which is the case in disentanglement learning),
then we can assume there exists a canonical data point $x_0$ such that
every other data point $x$ is transformed by a group element
$g_{0\rightarrow x}$ on the canonical data point:
\begin{align}
    \sigma(x) = \sigma(g_{0\rightarrow x}(x_0)) =
    g'_{0\rightarrow x}\sigma(x_0).
\end{align}
Now we can see the representation structure is actually determined by the group
since the group defines the \emph{relation} between samples
in the latent space while the absolute embedding position
of a single data point is not important.
It is thus reasonable to assign a fixed representation to the canonical
data point so that only the group structure is learned.
The key step is that we choose the group identity $e$ as the canonical
representation so that every sample has a \emph{group representation}:
\begin{align}
    \sigma(x) = g'_{0\rightarrow x}\sigma(x_0)
    = g'_{0\rightarrow x}e = g'_{0\rightarrow x},
\end{align}
and the samples are embedded on a group structure.
This group can now be learned with static observations since
each observation represents a transformation from the canonical data point.
Note that this group representation is similar to the pose representation used
in \cite{capsule1} where the pose of a capsule
is represented by a matrix which specifies the transformation
between the canonical entity and the actual instantiation.

Unfortunately, though we can assume such
a group representation exists, it is not
clear how it can be learned in practice.
It is also unclear how data samples can be mapped onto a group,
how sampling can be conducted on a group,
and how optimization can be implemented on a group.
In this paper, we shrink our concentration on
continuous groups (Lie groups) since
(1) most attributes in data consist of continuous variations,
and (2) it is easier to be parameterized (and thus learned).
We refer this type of representations as Lie group representations.

\subsection{Decomposition with Lie Algebra Parameterization}
\label{sec:lie_param}
We parameterize Lie groups in this paper by a
basis $\{A_i\}_{i=1}^m$ in the Lie algebra:
\begin{align}
    g(t) &= \text{exp}(A(t)), g \in G, A \in \mathfrak{g}, \nonumber \\
    A(t) &= t_1 A_1 + t_2 A_2 + ... + t_{m} A_{m}, \forall t_i \in \mathbb{R},
    \label{eq:lie_alg_def}
\end{align}
where $\mathfrak{g}$ is the Lie algebra of Lie group $G$, and
$\text{exp}(\cdot)$ is the matrix exponential map which maps
an element in a Lie algebra to the corresponding Lie group.
$t = (t_1, t_2, ..., t_{m})$ represents the coordinates in $\mathfrak{g}$
of a data sample.
When $t = \bsb{0}$, the corresponding element on the group
is $e = \text{exp}(\bsb{0})$, the identity.
The Lie algebra is a vector space
thus enables training with general optimization methods like SGD.
In our implementation, the
basis $\{A_i\}_{i=1}^m$ is optimized (as weights) to find
an adaptive group structure, and every data sample can thus be
identified by the coordinates $t$ in the Lie algebra.
This also enables data sampling since we can attach prior distributions
on the coordinates $t$ so as to simulate a distribution on the group structure.

This Lie algebra parameterization without any constraints
cannot guarantee a group structure to be decomposed into subgroups
with each subgroup independently parameterized
by a single coordinate (homomorphism between the group and
the coordinate space), e.g.
$\text{exp}(t_1 A_1 + t_2 A_2) \neq \text{exp}(t_1 A_1) \text{exp}(t_2 A_2)$.
This is not desirable since we would like a dimension
in the coordinate $t_i$ to identify a single subgroup
$\text{exp}(t_i A_i)$ and thus further represent a single
variation in the data space.
If this is not satisfied, the disentanglement is not
achieved since the data variations are not encoded into
separate subspaces ($t_1, t_2, ..., t_m$).
In the next two paragraphs we discuss two options to solve this problem.

\textbf{One-parameter subgroup decomposition constraint.}
Based on our Lie algebra parameterization (Eq. \ref{eq:lie_alg_def}),
we have the following proposition to decompose
a Lie group into one-parameter subgroups:
\begin{proposition}
    \label{prop:decomp}
    If $A_i A_j = A_j A_i, \forall i,j$, then
    \begin{align}
        &\emph{exp}(t_1 A_1 + t_2 A_2 + ... t_m A_m) \nonumber \\
        &= \emph{exp}(t_1 A_1)\emph{exp}(t_2 A_2)...\emph{exp}(t_m A_m) \\
        &= \prod_{\emph{perm}(i)} \emph{exp}(t_i A_i)
        \label{eq:decomp_constraint}.
    \end{align}
\end{proposition}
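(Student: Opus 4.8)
The plan is to reduce the whole statement to a single classical fact, the \emph{commuting-exponential lemma}, and then bootstrap to the full $m$-fold product by induction. Concretely, the key lemma I would establish first is: for any two matrices $X, Y$ with $XY = YX$, one has $\exp(X+Y) = \exp(X)\exp(Y)$. I would prove this directly from the power-series definition $\exp(X) = \sum_{k\ge 0} X^k/k!$. Because $X$ and $Y$ commute, the binomial theorem holds for matrices, giving $(X+Y)^n = \sum_{k=0}^n \binom{n}{k} X^k Y^{n-k}$, so that
\begin{align}
    \exp(X+Y) = \sum_{n\ge 0}\frac{1}{n!}\sum_{k=0}^n \binom{n}{k} X^k Y^{n-k}
    = \sum_{j,k\ge 0} \frac{X^k}{k!}\frac{Y^j}{j!},
\end{align}
which (after reindexing with $j = n-k$) is exactly the Cauchy product of the series $\exp(X)$ and $\exp(Y)$.

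Having this lemma, the proposition follows by induction on $m$. First I note that the scalar coefficients are harmless: if $A_iA_j = A_jA_i$, then $(t_iA_i)(t_jA_j) = t_it_j A_iA_j = t_jt_i A_jA_i = (t_jA_j)(t_iA_i)$, so the matrices $X_i := t_iA_i$ pairwise commute. For the inductive step I would write the exponent as $X_1 + (X_2 + \cdots + X_m)$ and observe that $X_1$ commutes with each $X_j$ and hence with their sum; applying the lemma peels off the leftmost factor $\exp(X_1) = \exp(t_1A_1)$, and the induction hypothesis handles the remaining $m-1$ terms. This yields the first displayed equality.

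For the final equality over an arbitrary permutation $\mathrm{perm}(i)$, I would argue that the individual factors commute as group elements, so any ordering gives the same product. Indeed, if $A_iA_j = A_jA_i$, then $A_j$ commutes with every power $A_i^k$, hence with the power series $\exp(t_iA_i)$; applying this once more shows $\exp(t_jA_j)$ and $\exp(t_iA_i)$ commute. Since all $m$ factors pairwise commute, the product is independent of the order.

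The only genuinely delicate step is the interchange of summation in the display, i.e. the legitimacy of the Cauchy-product rearrangement. The hard part will be justifying this, which rests on the absolute convergence of the matrix exponential series in any submultiplicative norm, so that the double series converges absolutely and may be summed in any order; I would invoke this standard fact rather than re-derive it. Everything else is elementary, so I expect the write-up to be short once the commuting-exponential lemma is in place.
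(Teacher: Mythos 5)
Your proposal is correct and takes essentially the same route as the paper's appendix proof: the classical commuting-exponential lemma $\exp(X+Y)=\exp(X)\exp(Y)$ for $XY=YX$, established via the binomial theorem and the Cauchy product of the absolutely convergent exponential series, then extended to $m$ factors by induction, with the permutation-invariance of Eq.~\ref{eq:decomp_constraint} following from the pairwise commutation of the factors $\exp(t_iA_i)$. Your explicit attention to the rearrangement step (absolute convergence in a submultiplicative norm) is a sound and standard justification, and nothing in the argument is missing.
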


\begin{proof}
    See Appendix 1.
\end{proof}
Eq. \ref{eq:decomp_constraint} means the equation holds for
any permutation of the index $i$.
This decomposition ensures that the group structure is decomposed into subgroups
with each one parameterized by a single coordinate $t_i$.
Via this constraint, the data variations can be considered as disentangled
into individual latent dimensions $t_i$'s
if an equivariance between the data variations and the group structure
is learned.
Since this decomposition holds for any permutation of the subgroups,
the order of the subgroups would not influence the composition results
thus the original group becomes commutative in terms of the subgroups.
This shows a limitation of our method where it cannot disentangle
variations that cannot be equivariantly represented by a commutative
Lie group, e.g. 3D rotation decomposition along three orthogonal axes.

\textbf{Hessian Penalty constraint.}
Besides learning disentanglement via enforcing the subgroup decomposition,
we can also incorporate other useful disentanglement
constraints like Hessian Penalty
\cite{peebles2020hessian} to the group representation.
The Hessian Penalty assumes that the Hessian matrix with respect to a
disentangled representation is always zero since the
variation controlled by a dimension should not be a function
of another dimension (independent):
\begin{align}
    H_{ij} = \frac{\partial^2 f(z)}{\partial z_i \partial z_j} =
    \frac{\partial}{\partial z_j}
    \Big(\frac{\partial f(z)}{\partial z_i}\Big) = 0,
\end{align}
where $z$ is the disentangled representation,
and $f(\cdot)$ is a function of $z$.
Our Lie algebra parameterization is compatible with this
constraint, and we have the following proposition:
\begin{proposition}
    \label{prop:hessian}
    If $A_i A_j = 0, \forall i \neq j$, then
    \begin{align}
        H_{ij} = \frac{\partial^2 g(t)}{\partial t_i \partial t_j} = 0,
        \label{eq:hessian_constraint}
    \end{align}
\end{proposition}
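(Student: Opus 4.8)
The plan is to work directly from the power-series definition of the matrix exponential and exploit the annihilation condition $A_i A_j = 0$ for $i \neq j$ to collapse the expansion into a sum of single-variable terms. First I would write $g(t) = \exp(A(t)) = \sum_{k=0}^{\infty} \frac{1}{k!}A(t)^k$ with $A(t) = \sum_{i=1}^m t_i A_i$, so that the whole problem reduces to understanding the powers $A(t)^k$.

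The central step is a lemma, proved by induction on $k$, asserting that $A(t)^k = \sum_{i=1}^m t_i^k A_i^k$ for every $k \geq 1$. The base case $k=1$ is immediate. For the inductive step I would write $A(t)^{k+1} = A(t)^k\, A(t) = \big(\sum_i t_i^k A_i^k\big)\big(\sum_j t_j A_j\big) = \sum_{i,j} t_i^k t_j\, A_i^k A_j$, and then observe that every off-diagonal term vanishes: for $i \neq j$ we have $A_i^k A_j = A_i^{k-1}(A_i A_j) = 0$ by hypothesis. Only the diagonal terms $A_i^k A_i = A_i^{k+1}$ survive, giving $A(t)^{k+1} = \sum_i t_i^{k+1}A_i^{k+1}$, which closes the induction.

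Substituting the lemma back into the series yields $g(t) = I + \sum_{k=1}^\infty \frac{1}{k!}\sum_i t_i^k A_i^k = I + \sum_{i=1}^m\big(\exp(t_i A_i) - I\big)$, i.e. $g(t)$ is a constant matrix plus a sum of terms each depending on only one coordinate. The conclusion is then routine: differentiating once, $\partial g/\partial t_j = A_j \exp(t_j A_j)$ depends only on $t_j$, so for $i \neq j$ the mixed partial $H_{ij} = \partial/\partial t_i\,\big(A_j \exp(t_j A_j)\big) = 0$.

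The main obstacle is making the inductive step fully rigorous — in particular, recognizing that the stronger hypothesis $A_i A_j = 0$ (rather than mere commutativity as in Proposition \ref{prop:decomp}) is exactly what forces all mixed monomials to disappear, and being careful that the $k=0$ identity term of the exponential contributes no cross terms. Once the separated form $g(t) = I + \sum_i(\exp(t_i A_i) - I)$ is in hand, the vanishing of the off-diagonal Hessian entries follows with no further work; the interchange of the double sum with the infinite series is justified by the absolute convergence of the matrix exponential.
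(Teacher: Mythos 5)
Your proof is correct and takes the canonical route --- using the annihilation hypothesis $A_iA_j = 0$ ($i \neq j$) to collapse the powers $A(t)^k$ to $\sum_i t_i^k A_i^k$ and hence the exponential to the separable form $g(t) = I + \sum_{i=1}^m\bigl(\exp(t_i A_i) - I\bigr)$, from which the vanishing of the mixed partials is immediate; this is essentially the argument the paper defers to its Appendix 2. Your induction also correctly pinpoints where the hypothesis strengthens mere commutativity (as in Proposition~\ref{prop:decomp}): it is exactly what kills every mixed monomial, not just what permits reordering.
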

where $g$ is the map defined in Eq. \ref{eq:lie_alg_def}.
\begin{proof}
    See Appendix 2.
\end{proof}
Note that this is a more strict constraint than Proposition
\ref{prop:decomp} since $A_i A_j = A_j A_i$ is implied by $A_i A_j = 0$,
which also enforces the commutative group decomposition.
This constraint further ensures that the dynamics caused
by a subgroup is not affected by other subgroups
at the group representation level (independent).
Different from the original Hessian Penalty paper \cite{peebles2020hessian}
where the constraint is implemented on multiple feature maps
with an unbiased stochastic approximator,
our method penalizes the model only on the Lie group structure
(using the Lie algebra basis), which is a different and a simpler
implementation.

In summary, the Lie algebra parameterization enables practical learning
by converting an optimization problem on groups to vector spaces.
It also enables new constraints for encouraging disentanglement
by enforcing commutative group decomposition and Hessian penalty.

\begin{figure*}[t]
\begin{center}
   \includegraphics[width=\linewidth]{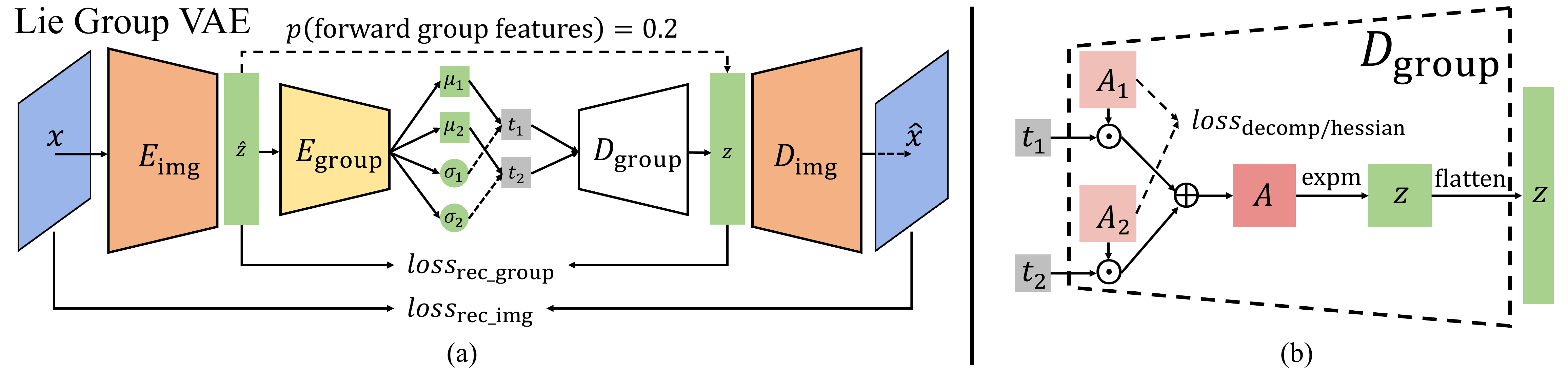}
\end{center}
    \caption{(a) The overall architecture of our proposed Lie Group VAE
    (bottleneck-VAE if $D_\text{group}$ is a general layer;
    Commutative Lie Group VAE if constraints from Proposition 1 and 2
    are applied).
    An input image is fed into the encoder $E_\text{img}$ to
    obtain the encoded group representation $\hat{z}$.
    The representation fed through a group encoder $E_\text{group}$
    (a two-layer MLP) to obtain the Lie algebra coordinates $t$
    using reparameterization trick.
    The coordinates are then fed through a group decoder (exponential
    mapping layer) to obtain the group representation $z$, which is
    then fed through a decoder to reconstruct the image.
    (b) Detailed illustration of the exponential mapping
    layer $D_\text{group}$. For each
    coordinate scalar $t_i$ we learn a Lie algebra basis $A_i$ (a matrix),
    where constraints from Proposition
    \ref{prop:decomp} and \ref{prop:hessian} can
    be applied as regularizations. $\odot$ denotes broadcast multiplication,
    $\oplus$ sums up the matrices, and $\text{expm}$ is the
    matrix exponential function. The obtained
    group representation is flattened to a vector before output.}
\label{fig:lie_vae}
\end{figure*}

\subsection{Commutative Lie Group VAE}
\label{sec:lie_vae}
In this section, we present a simple model to learn disentangled
representations using the group-related
techniques proposed in the last section.
We first introduce a VAE-variant (see Appendix 3 for an introduction
of VAE) called bottleneck-VAE which forces
a layer of feature in the encoder to match
a layer of feature in the decoder. The model is a realization
of the maximization of a lower bound of $\text{log}\, p(x)$,
which is presented in the following proposition:
\begin{proposition}
    Suppose two latent variables $z$ and $t$ are used to model the
    log-likelihood of data $x$, then we have:
    \begin{align}
        &\emph{log}\,p(x) \geq \mathcal{L}_{\text{bottleneck}}(x, z, t)
        \nonumber \\
        &= \mathbb{E}_{q(z|x)}\mathbb{E}_{q(t|x, z)}
        \emph{log}\,p(x,z|t) \nonumber\\
        &\quad- \mathbb{E}_{q(z|x)}KL(q(t|x,z)||p(t))
        - \mathbb{E}_{q(z|x)}\emph{log}\,q(z|x) \label{eq:pre_our_elbo}\\
        &= \mathbb{E}_{q(z|x)q(t|z)}
        \emph{log}\,p(x|z)p(z|t) \nonumber\\
        &\quad- \mathbb{E}_{q(z|x)}KL(q(t|z)||p(t))
        - \mathbb{E}_{q(z|x)}\emph{log}\,q(z|x) \label{eq:our_elbo},
    \end{align}
    where Eq. \ref{eq:our_elbo} holds because
    we assume Markov property: $q(t|z) = q(t|x,z)$, $p(x|z,t) = p(x|z)$.
\end{proposition}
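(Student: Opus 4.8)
The plan is to obtain the bound as a standard evidence lower bound (ELBO) for a two-latent-variable model and then specialize it using the stated Markov assumptions. First I would begin from the marginal log-likelihood, introduce the joint variational posterior $q(z,t|x) = q(z|x)\,q(t|x,z)$, and write
\[
\log p(x) = \log \int\!\!\int q(z,t|x)\,\frac{p(x,z,t)}{q(z,t|x)}\,dt\,dz.
\]
Applying Jensen's inequality to the concave logarithm (equivalently, dropping the non-negative term $\mathrm{KL}(q(z,t|x)\,\|\,p(z,t|x))$ from the exact identity $\log p(x) = \mathbb{E}_{q(z,t|x)}\log\frac{p(x,z,t)}{q(z,t|x)} + \mathrm{KL}(q(z,t|x)\,\|\,p(z,t|x))$) gives
\[
\log p(x) \geq \mathbb{E}_{q(z,t|x)}\log\frac{p(x,z,t)}{q(z,t|x)}.
\]

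Next I would factor the joint as $p(x,z,t) = p(x,z|t)\,p(t)$ and the posterior as $q(z,t|x)=q(z|x)\,q(t|x,z)$, then split the logarithm of the ratio into three summands and take the nested expectation $\mathbb{E}_{q(z|x)}\mathbb{E}_{q(t|x,z)}$ of each. The first summand yields the reconstruction term $\mathbb{E}_{q(z|x)}\mathbb{E}_{q(t|x,z)}\log p(x,z|t)$; the ratio $\log\bigl(p(t)/q(t|x,z)\bigr)$ collapses into $-\mathbb{E}_{q(z|x)}\mathrm{KL}(q(t|x,z)\,\|\,p(t))$; and since $\log q(z|x)$ is constant in $t$, the inner $t$-expectation acts trivially and leaves $-\mathbb{E}_{q(z|x)}\log q(z|x)$. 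This reproduces Eq.~\eqref{eq:pre_our_elbo}.

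Finally I would specialize to the assumed graphical structure. The encoder Markov property $q(t|x,z)=q(t|z)$ lets me replace every occurrence of $q(t|x,z)$ by $q(t|z)$ and merge the nested expectation into the single joint expectation $\mathbb{E}_{q(z|x)q(t|z)}$; the decoder Markov property $p(x|z,t)=p(x|z)$ factors $p(x,z|t)=p(x|z,t)\,p(z|t)=p(x|z)\,p(z|t)$, turning the reconstruction term into $\mathbb{E}_{q(z|x)q(t|z)}\log p(x|z)p(z|t)$. The two remaining terms carry over verbatim, giving Eq.~\eqref{eq:our_elbo}.

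The step requiring the most care is the bookkeeping in the expectation-splitting: I must keep the outer expectation $\mathbb{E}_{q(z|x)}$ attached to all three summands while correctly tracking which factors depend on $t$ (so the inner expectation acts nontrivially and produces a KL) versus which are constant in $t$ (so the inner expectation drops out). No individual step is deep---the derivation is the familiar variational lower-bound manipulation---but the two-variable nested structure makes it easy to misplace an expectation or mis-factorize the joint, so I would verify each factorization explicitly before invoking the Markov simplifications.
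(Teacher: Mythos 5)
Your proof is correct and follows the standard route the paper takes: apply Jensen's inequality to $\log p(x)$ with the joint variational posterior $q(z,t|x)=q(z|x)\,q(t|x,z)$, factor $p(x,z,t)=p(x,z|t)\,p(t)$ to split the bound into the reconstruction, KL, and entropy terms of Eq.~\ref{eq:pre_our_elbo}, and then invoke the Markov assumptions $q(t|x,z)=q(t|z)$ and $p(x|z,t)=p(x|z)$ to obtain Eq.~\ref{eq:our_elbo}. The expectation bookkeeping you flag as the delicate step is handled correctly, so nothing further is needed.
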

\begin{proof}
    See Appendix 4.
\end{proof}
The variable $z$ is the feature to be shared in the encoder and the decoder.
In practice, we model $p(z|t)$, $p(x|z)$, $q(z|x)$ to be deterministic
networks ($D_\text{group}$, $D_\text{img}$, $E_\text{img}$ in Fig.
\ref{fig:lie_vae} (a)) while $q(t|z)$ to be a stochastic network
($E_\text{group}$ in Fig. \ref{fig:lie_vae} (a)).
The first term in Eq. \ref{eq:our_elbo} is implemented as two reconstruction
losses on $x$ and $z$ respectively ($loss_\text{rec\_img}$
and $loss_\text{rec\_group}$ in Fig. \ref{fig:lie_vae} (a)),
while the second term (KL-divergence)
is implemented the same as in a standard VAE. The third term is the
entropy of $z$ when $x$ is fixed, which is a constant since we model $q(z|x)$
to be a deterministic network, and is not implemented in practice.
We forward the output feature of the encoder $E_\text{img}$ directly to
the decoder $D_\text{img}$ at the probability of 0.2. This forces
that both features ($z$ and $\hat{z}$) are trained to
reconstruct the same image so that
they are encouraged to represent the same (high-level) information.

The Lie group structure is imposed on the (shared) feature $z$
in the bottleneck-VAE, and it is easily implemented by constructing
the network $p(z|t)$ ($D_\text{group}$) to
be a Lie algebra parameterization $z = g(t)$
as defined in Eq. \ref{eq:lie_alg_def}.
In Fig. \ref{fig:lie_vae} (b) we show the details of how this module
is implemented.
For each input coordinate $t_j$ we learn a Lie algebra basis element
(a matrix) $A_j$ in shape $|V| \times |V|$
(recall that $V$ denotes the vector space on which the group is acting).
The coordinates and basis are aggregated and
fed into a matrix exponential mapping layer, resulting in a group
representation. Popular deep learning toolkits
TensorFlow \cite{tensorflow2015-whitepaper} and Pytorch
\cite{NEURIPS2019_9015} both offer in-built
differentiable implementations for matrix exponential map,
which are computed by approximation
methods proposed in \cite{tf_expm,pt_expm}.
We name a bottleneck-VAE equipped with a Lie group structure as a
Lie Group VAE.

Note that the bottleneck-VAE is essential to the realization of
learning equivariance with a group structure because the encoder network
needs to map the data directly onto a group structure
so that it can be encouraged to learn the correspondance between
the data variations and the group transformations.
If there is no such a feature-sharing constraint (using plain VAE
as a backbone), the encoder becomes a regular neural network
and is not trying to learn a equivariance
on the group structure but on the vector space, and the exponential
mapping layer is just to assist the decoder to
reconstruct the input data.



The one-parameter decomposition constraint and the
Hessian penalty constraint can be directly applied
as regularizations on Lie algebra basis $\{A_j\}_{j=1}^m$
(see Proposition \ref{prop:decomp} and \ref{prop:hessian}).
We name a Lie Group VAE equipped with either constraint a Commutative
Lie Group VAE as both constraints enforce a commutative
Lie group decomposition.

\section{Experiments}
\label{sec:experiments}
We conduct experiments by following the general
unsupervised disentanglement learning setup, i.e. training models
on a dataset without any supervision and evaluate the quality
of disentanglement by metrics on synthetic datasets and by
latent traversal inspection on real-world datasets.
Implementation details are shown in Appendix 5.
Code is available at
https://github.com/zhuxinqimac/CommutativeLieGroupVAE-Pytorch.
\subsection{Synthetic datasets}
We conduct experiments on the two most popular disentanglement datasets:
DSprites \cite{dsprites17} and 3DShapes \cite{Kim2018DisentanglingBF}.
The DSprites dataset consists of $64\times 64$ 2D shapes rendered with
5 independent generative factors, i.e. \emph{shape} (3 values),
\emph{scale} (6 values), \emph{orientation} (40 values),
\emph{x position} (32 values), and \emph{y position} (32 values).
The total number of data samples is 737,280, with each factor combination
appears exactly once.
The 3DShapes dataset contains $64\times 64$ images of 3D shapes generated from
6 independent factors, \emph{floor color} (10 values),
\emph{wall color} (10 values), \emph{object color} (10 values),
\emph{scale} (8 values), \emph{shape} (4 values),
\emph{orientation} (15 values).
There are 480,000 images in total in this dataset.
For evaluation, we report results with various disentanglement metrics
for an overall and robust evaluation.
The used metrics include FactorVAE metric (FVM) \cite{Kim2018DisentanglingBF},
SAP metric \cite{Kumar2017VariationalIO},
Mutual Information Gap (MIG) \cite{chen2018isolating},
and DCI Disentanglement metric \cite{Eastwood2018AFF}.
All the reported scores shown in this paper are averaged by 10 random runs.
For ablation studies, we randomly split the dataset into
training set (9/10) and test set (1/10) in each run,
and compute evaluation scores on the test set.
For the state-of-the-art comparison, we follow the tradition by
training and evaluating on the whole dataset.
We report the mean and standard deviation in all tables.

\begin{table}
    \small
    \begin{center}
        \begin{tabular}{lllll}
        \toprule
            Models & FVM & SAP & MIG & DCI \\
        \midrule
            VAE & $69.4_{\pm 10.9}$ & $19.7_{\pm 10.6}$ & $7.8_{\pm 6.4}$ & $8.1_{\pm 4.1}$ \\
            +bottle & $74.6_{\pm 8.1}$ & $29.2_{\pm 12.1}$ & $12.9_{\pm 6.6}$ & $11.6_{\pm 3.3}$ \\
            +exp & $\bsb{83.6}_{\pm 3.2}$ & $\bsb{40.7}_{\pm 12.2}$ & $\bsb{17.2}_{\pm 6.8}$ & $\bsb{15.1}_{\pm 2.4}$ \\
        \bottomrule
        \end{tabular}
    \end{center}
    \caption{Ablation study of bottleneck-VAE and exponential map on DSprites.}
    \label{table:components_dsprites}
\end{table}
\begin{figure}[t]
\begin{center}
   \includegraphics[width=\linewidth]{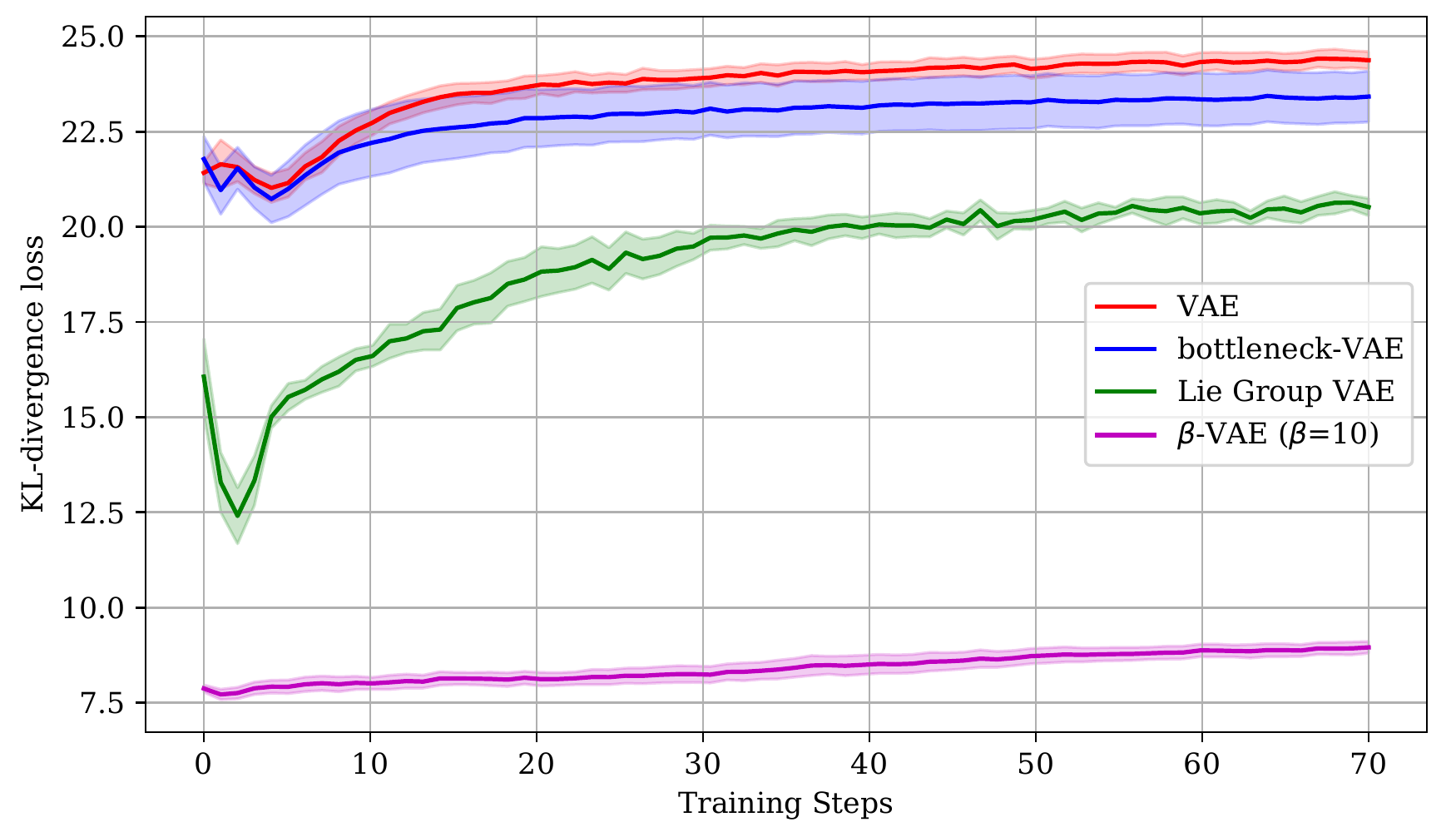}
\end{center}
    \caption{How the KL-divergence loss ($KL(q(t|x) || p(t))$)
    evolves during training for different models.}
\label{fig:kl_vs_iter}
\end{figure}
\textbf{Effectiveness of the Proposed Components.}
We first investigate if the proposed model components can contribute
to the disentanglement learning individually.
In Table \ref{table:components_dsprites}, we show how much a
VAE can gain from these incorporated modules
in terms of disentanglement scores. In Fig. \ref{fig:kl_vs_iter}
we show how the KL-divergence loss evolves.
In the table, the +bottle entry corresponds to the introduced
bottleneck-VAE which extends the plain VAE with an additional constraint
by enforcing a shared layer of features between the encoder and decoder.
Although its disentanglement results are still evidently inferior than
other complete models (e.g. see results in other tables), it is quite surprising
such a simple feature-sharing constraint can boost a VAE
by an obvious margin. A potential explanation is that the bottleneck-VAE
constrains the variations encoded in the latent codes
since the model should \emph{be very careful} to not change the
to-be-shared feature too much or it becomes harder to reconstruct.
This is beneficial to disentanglement since entanglement usually
comes from codes which capture too much information so that they
\emph{overlap} or \emph{intersect} with each other.
This constraint on variation encoding can also be observed
in Fig. \ref{fig:kl_vs_iter} where for the +bottleneck (blue) line
the KL-divergence between the posterior distribution and the latent prior
becomes slightly smaller than a plain VAE (red).
By adding the exponential mapping layer ($D_{group}$), which enforces
the model to keep a group structure in the bottleneck,
the disentanglement performance has been boosted to a competing level
with other state-of-the-art methods.
This is a key modification since from Fig. \ref{fig:kl_vs_iter}
we see the +exp (green) line evolves more elastically than
the VAE baselines, e.g. VAE, +bottleneck, $\beta$-VAE ($\beta$=10).
It shows that at the beginning the information encoded in the latent code
is heavily constrained (moving closer towards the $\beta$-VAE) by
the group-structure bottleneck. However because of
the adaptivity of the group structure, more variations can be
gradually learned as the training goes on.
On the contrary, the KL losses shown in other VAE baselines change
slowly, indicating the data variations are hard to be newly discovered
during training in these models.

\begin{table}
    \small
    \begin{center}
        \begin{tabular}{lllll}
        \toprule
            $\text{Size}_\text{group}$ & FVM & SAP & MIG & DCI \\
        \midrule
            4 & $23.6_{\pm 3.3}$ & $6.3_{\pm 6.0}$ & $4.2_{\pm 3.9}$ & $3_{\pm 0.5}$ \\
            9 & $57.4_{\pm 5.8}$ & $34.1_{\pm 12.9}$ & $17.3_{\pm 7.4}$ & $12.4_{\pm 4.4}$ \\
            25 & $79.8_{\pm 2.8}$ & $39.6_{\pm 13.4}$ & $20.6_{\pm 8.5}$ & $19.9_{\pm 3.8}$ \\
            64 & $82.7_{\pm 3.7}$ & $42.2_{\pm 12.5}$ & $22.1_{\pm 10.1}$ & $\bsb{20.0}_{\pm 6.8}$ \\
            81 & $84.4_{\pm 2.6}$ & $45.2_{\pm 10.5}$ & $23.0_{\pm 8.4}$ & $19.6_{\pm 6.3}$ \\
            100 & $\bsb{85.5}_{\pm 2.2}$ & $\bsb{50.8}_{\pm 5.0}$ & $\bsb{25.4}_{\pm 6.1}$ & $19.7_{\pm 4.6}$ \\
        \bottomrule
        \end{tabular}
    \end{center}
    \caption{Ablation study of group size on DSprites.}
    \label{table:group_size_dsprites}
\end{table}
\begin{table}
    \small
    \begin{center}
        \begin{tabular}{lllll}
        \toprule
            $\lambda_\text{decomp}$ & FVM & SAP & MIG & DCI \\
        \midrule
            0 & $83.6_{\pm 3.2}$ & $40.7_{\pm 12.2}$ & $17.2_{\pm 6.8}$ & $15.1_{\pm 2.4}$ \\
            5 & $84.0_{\pm 3.9}$ & $45.4_{\pm 11.5}$ & $20.5_{\pm 6.9}$ & $16.8_{\pm 4.3}$ \\
            20 & $\bsb{85.8}_{\pm 6.9}$ & $48.7_{\pm 8.4}$ & $23.6_{\pm 5.0}$ & $18.2_{\pm 3.0}$ \\
            40 & $85.5_{\pm 2.2}$ & $\bsb{50.8}_{\pm 5.0}$ & $\bsb{25.4}_{\pm 6.1}$ & $\bsb{19.7}_{\pm 4.6}$ \\
            80 & $85.5_{\pm 4.8}$ & $47.1_{\pm 8.6}$ & $23.3_{\pm 6.2}$ & $18.3_{\pm 6.5}$ \\
        \bottomrule
        \end{tabular}
    \end{center}
    \caption{Ablation study of one-parameter decomposition on DSprites.}
    \label{table:one_param_dsprites}
\end{table}
\textbf{How the Group Representation Size Affects Disentanglement.}
In Table \ref{table:group_size_dsprites} we show how the disentanglement
scores are affected by the choice of group representation size.
Recall that the groups are represented by invertible matrices
(subgroups of $GL(V)$) therefore they all have sizes of squared numbers.
It can be expected that the larger the group representation is,
the more likely the subgroups (indexed by coordinates $t_i$'s) can control
different variations (due to the increased sparsity in a larger space).
In the table, we can see the group representation of size $4$ has almost
no capability for disentanglement because a group of $2\times 2$
matrices represents the linear transformations on a 2D plane,
which is hard to be decomposed into more than two
independent sub-transformations (e.g. scaling + rotation).
When the representation size exceeds 25 ($5 \times 5$), the model can
easily find a decomposition to represent different data variations,
and the disentanglement scores saturate.
We use the group representation size 100 for all other experiments.

\begin{table}
    \small
    \begin{center}
        \begin{tabular}{lllll}
        \toprule
            $\lambda_\text{hessian}$ & FVM & SAP & MIG & DCI \\
        \midrule
            0 & $83.6_{\pm 3.2}$ & $40.7_{\pm 12.2}$ & $17.2_{\pm 6.8}$ & $15.1_{\pm 2.4}$ \\
            5 & $83.8_{\pm 2.4}$ & $46.8_{\pm 12.8}$ & $19.8_{\pm 8.6}$ & $17.5_{\pm 5.6}$ \\
            20 & $86.1_{\pm 1.8}$ & $\bsb{54.1}_{\pm 1.2}$ & $\bsb{29.7}_{\pm 3.1}$ & $\bsb{23.4}_{\pm 4.1}$ \\
            40 & $\bsb{86.2}_{\pm 1.8}$ & $48.2_{\pm 1.9}$ & $25.2_{\pm 8.4}$ & $19.1_{\pm 4.1}$ \\
            80 & $85.0_{\pm 1.6}$ & $43.6_{\pm 11.3}$ & $20.1_{\pm 8.4}$ & $17.4_{\pm 4.2}$ \\
        \bottomrule
        \end{tabular}
    \end{center}
    \caption{Ablation study of Hessian penalty on DSprites.}
    \label{table:hessian_dsprites}
\end{table}
\begin{figure}[t]
\begin{center}
   \includegraphics[width=\linewidth]{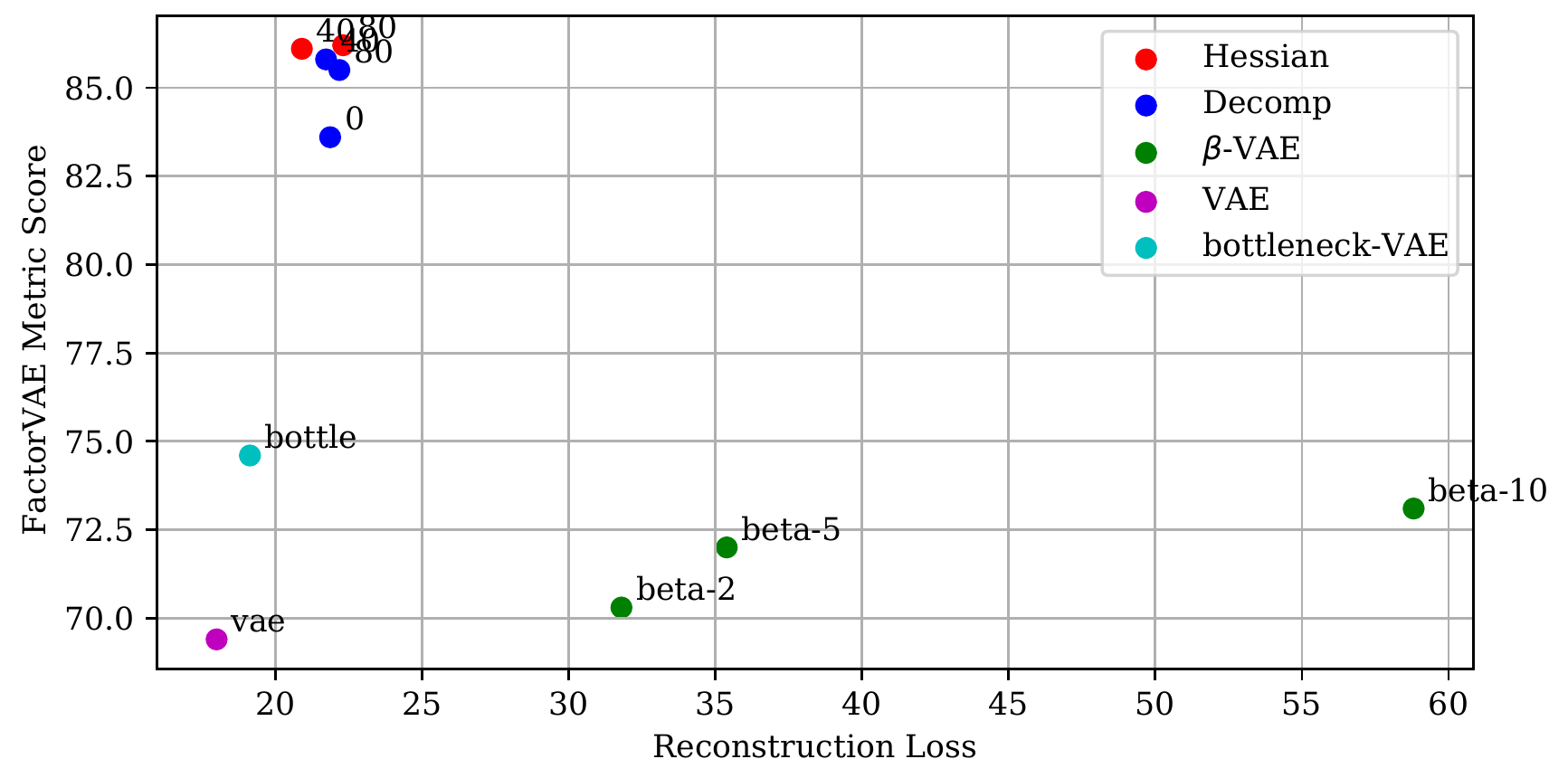}
\end{center}
    \caption{Reconstruction loss vs FactorVAE metric.}
\label{fig:recons_vs_fvm}
\end{figure}
\textbf{Effectiveness of One-parameter Subgroup Decomposition and
Hessian Penalty Constraints.}
Now we investigate how the induced Lie-group related constraints
benefit disentanglement learning.
For each constraint we use a hyper-parameter
$\lambda_{decomp} (\lambda_{hessian})$ to modulate the effect.
The results of one-parameter subgroup decomposition is shown in
Table \ref{table:one_param_dsprites}, and Hessian Penalty is in
Table \ref{table:hessian_dsprites}.
Both constraints can benefit disentanglement performance,
where $\lambda_{decomp}$ reaches the peak at 40 and $\lambda_{hessian}$
at 20.
We can see the Hessian Penalty constraint is more effective
than the subgroup decomposition as the performance gain in the former
one is more significant (54.1 vs 50.8 on SAP and 29.7 vs 25.4 on MIG).
This is due to that Hessian Penalty is a stronger constraint than
the subgroup decomposition since it requires the Lie algebra
basis elements to have mutual products of zeros while subgroup decomposition
only requires their commutators to be zeros.
This proves that forcing different subgroups to have independent
effect on the final group representation is very beneficial to
capture factorized data variations.
In Fig. \ref{fig:recons_vs_fvm} we show the scatter plot of
FactorVAE metric against reconstruction loss.
We see our proposed Commutative Lie Group constraints
boost the disentanglement performance at slight cost of
reconstruction quality, while the $\beta$-VAEs sacrifice
reconstruction severely.

\begin{table}[t]
    \begin{center}
        \begin{tabular}{lll}
        \toprule
            Model & DSprites & 3DShapes \\
        \midrule
            VAE & $69.4_{\pm 10.9}$ & $83.6_{\pm 6.5}$ \\
            $\beta$-VAE & $74.4_{\pm 7.7}$ & $91$ \cite{Kim2018DisentanglingBF} \\
            Cascade-VAE & $81.74_{\pm 2.97}$ & - \\
            Factor-VAE & $82.15_{\pm 0.88}$ & $89$ \cite{Kim2018DisentanglingBF} \\
        \midrule
            Ours & $\bsb{86.1}_{\pm 2.0}$ & $\bsb{93.2}_{\pm 4.0}$ \\
        \bottomrule
        \end{tabular}
    \end{center}
    \caption{Unsupervised disentanglement state-of-the-art comparison on
    DSprites and 3DShapes.}
    \label{table:unsupervised_SOTA}
\end{table}
\begin{figure}[t]
\begin{center}
   \includegraphics[width=\linewidth]{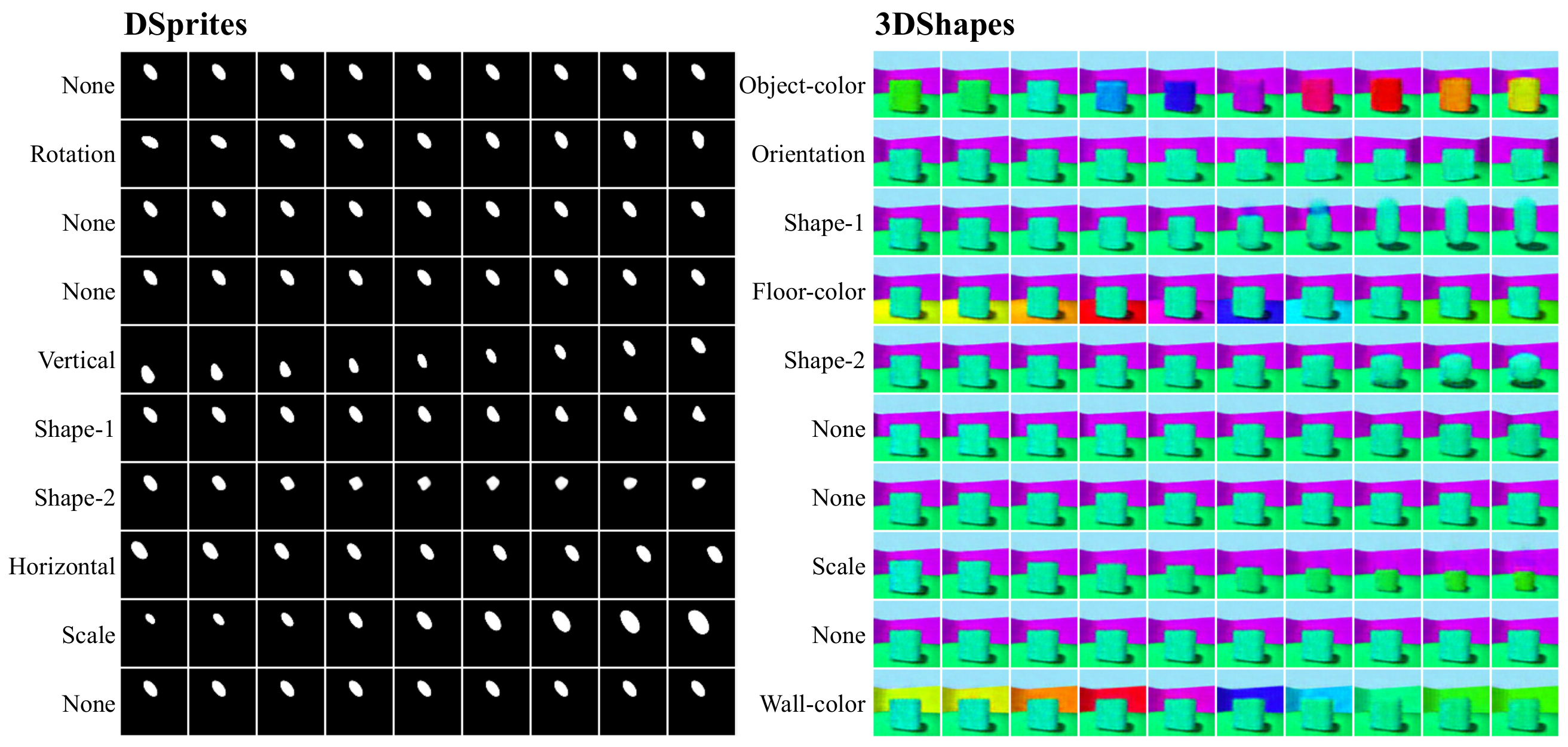}
\end{center}
    \caption{Latent traversals of our Commutative Lie Group VAE on
    DSprites and 3DShapes datasets.}
\label{fig:dsp_3ds_traversals}
\end{figure}
\textbf{State-of-the-art Comparison.}
In Table \ref{table:unsupervised_SOTA} we compare our Commutative
Lie Group VAEs
with other state-of-the-art models for learning continuous
latent variables on DSprites and 3DShapes datasets.
We use the whole dataset to train and evaluate our models
following the tradition
of unsupervised disentanglement learning.
Since most of the compared models only report the FVM score,
we follow this standard and report the best model on FVM.
We can see our Lie Group VAE model achieves the best performance
among all compared models.
It should be noticed that all other compared models enforce
the statistical independence assumption to achieve disentanglement
while our model is built based on another completely assumption
by leveraging group structures.
This indicates that our model has the potential to be further
improved if independence assumption is concurrently enforced.
Qualitative latent traversal results of our Commutative Lie Group VAE on
both datasets are shown in Fig. \ref{fig:dsp_3ds_traversals}.

\subsection{Real-world Datasets}
In this section we run our Commutative Lie Group VAE on real-world datasets
including CelebA \cite{Liu2014DeepLF}, Mnist \cite{MNIST},
and 3DChairs \cite{Aubry2014Seeing3C}.

\textbf{CelebA} dataset contains 202,599 images of cropped real-world
human faces. We crop the center
$128 \times 128$ area and resize the images to
$64 \times 64$ for this experiment.
In Fig. \ref{fig:celeba_trav} we show the qualitative results of our
model ($\lambda_{hessian} = 40$) trained on CelebA, and compare
it with the FactorVAE baseline.
For most of the attributes, our model can extract cleaner semantic variations
than FactorVAE. For example, the \emph{background} concept captured by
FactorVAE is entangled with \emph{smile} while in our model
it is independently encoded.
Additionally, our model learns to encode the semantics of
\emph{forehead hair-style} and \emph{make-up} which are not shown
in the FactorVAE.
\begin{figure}[t]
\begin{center}
   \includegraphics[width=\linewidth]{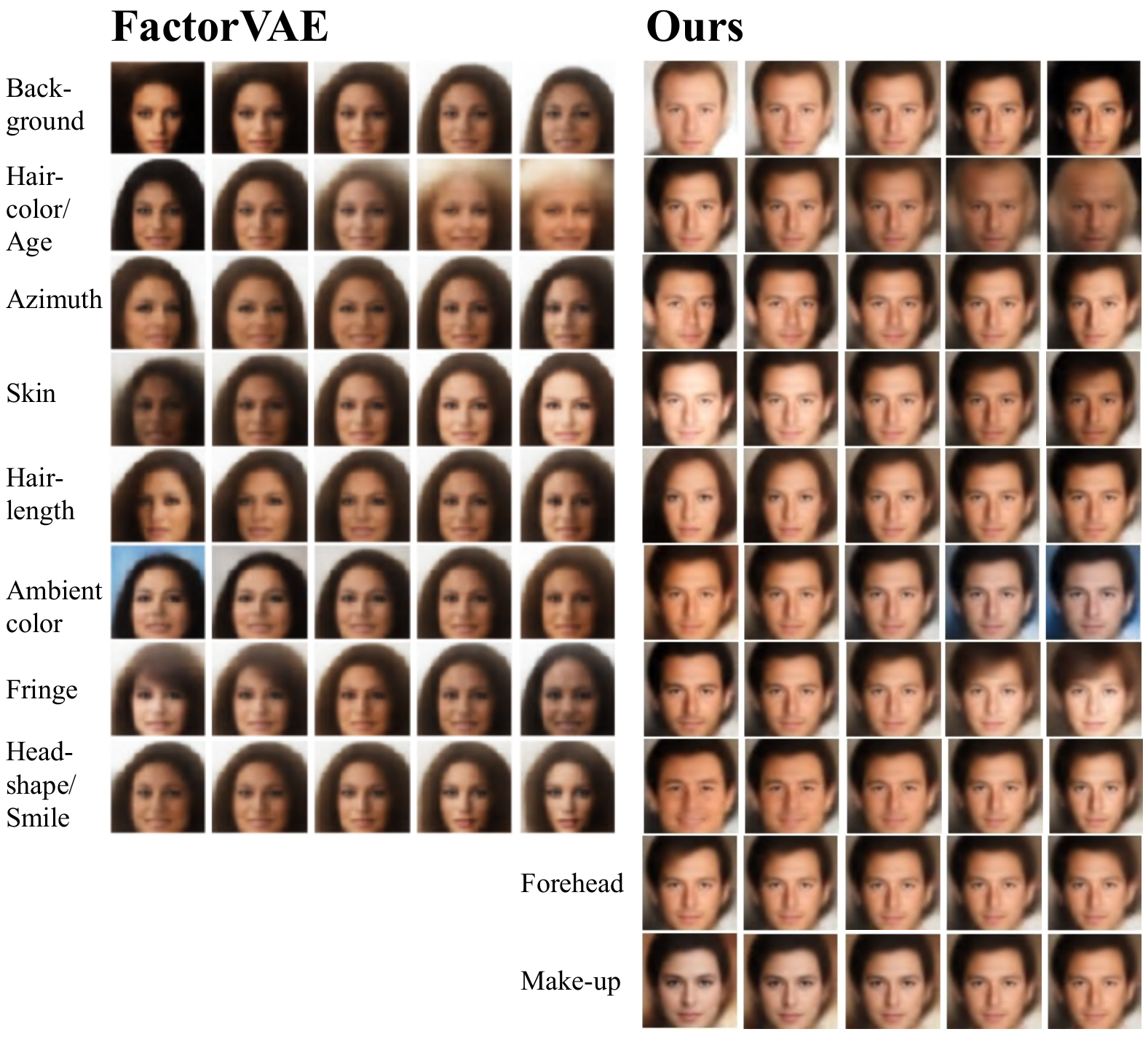}
\end{center}
    \caption{CelebA traversals compared with FactorVAE
    \cite{Kim2018DisentanglingBF}.}
\label{fig:celeba_trav}
\end{figure}

\textbf{Mnist} dataset consists of handwritten digits ($28\times 28$
images) in 10 classes.
We pad the images to size $32\times 32$ for easier usage.
We train our model with data of a same class
to learn continuous variations.
It is possible to integrate techniques for learning discrete latent
variables \cite{Dupont2018LearningDJ,Jeong2019LearningDA} in our
Commutative Lie Group VAE for unsupervised classification,
but we leave it for future work.
In Fig. \ref{fig:mnist_trav}, we show some interesting
semantic variations discovered by our model.
We can see our model learns to control the circle size in \emph{six}
and \emph{nine} with a variable, and also learns to represent
the subtle variation of curviness in \emph{one}.
In \emph{four}, our model discovers a \emph{lifting} concept which
controls the level of the horizontal line shown in 4's.
\begin{figure}[t]
\begin{center}
   \includegraphics[width=\linewidth]{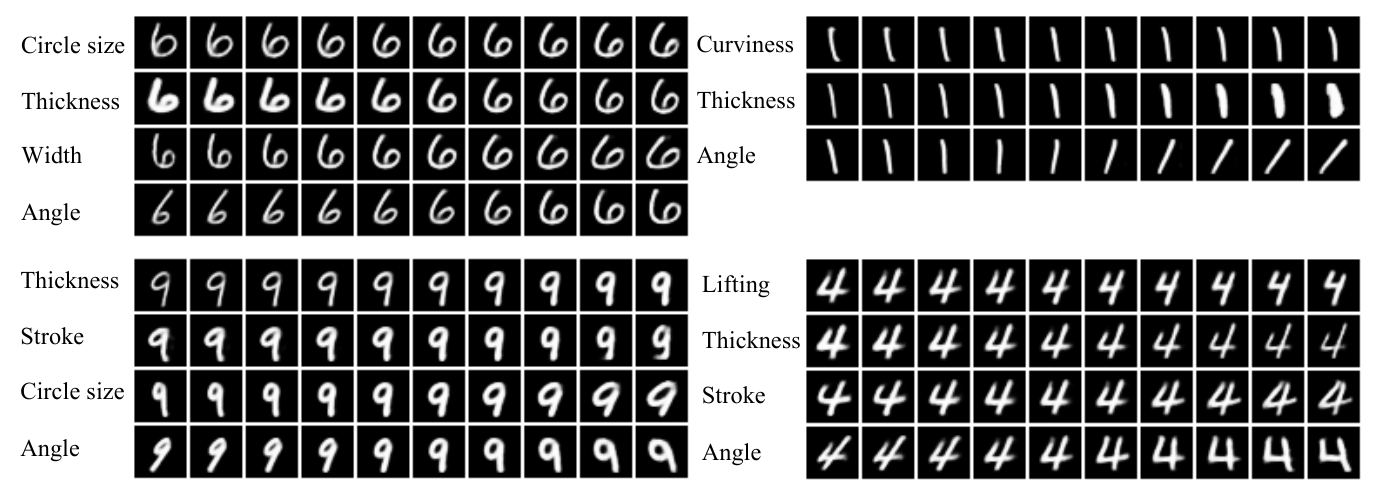}
\end{center}
    \caption{Per-class latent traversals on Mnist dataset.}
\label{fig:mnist_trav}
\end{figure}

\textbf{3DChairs} dataset contains 86,366 RGB images of various chairs of
resolution $64 \times 64$.
In Fig. \ref{fig:chairs_trav}, we compare our model
with the state-of-the-art CascadeVAE \cite{Jeong2019LearningDA}.
Both models achieve similar disentanglement quality, showing that
our model though based on a different assumption of group theory,
can still achieve the same-level
results as the state-of-the-art CascadeVAE based on information theory
which models statistical independence.
\begin{figure}[t]
\begin{center}
   \includegraphics[width=\linewidth]{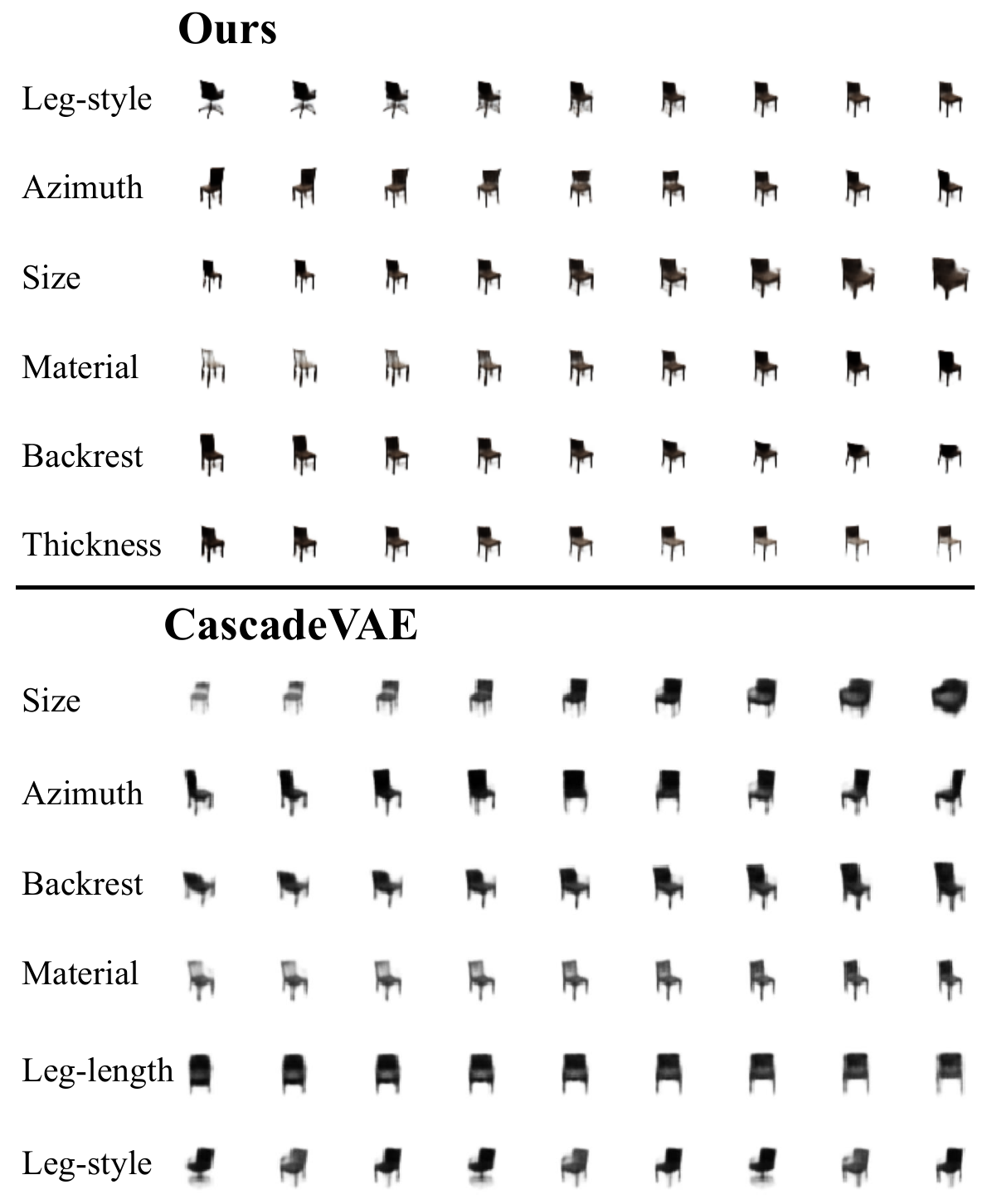}
\end{center}
    \caption{Comparing latent traversals of Commutative Lie Group VAE
    with CascadeVAE on 3DChairs.}
\label{fig:chairs_trav}
\end{figure}

\section{Conclusion}
In this paper we proposed to learn disentangled representations without
supervision by
capturing the data variations with an adaptive group structure.
This is based on the idea that a group structure can represent data variations
by group actions applied to itself.
The advantage of using groups over usual vector spaces is that
a group structure can not only equivariantly represent
variations but also be adaptively optimized to fit the diversity
in data variations.
By replacing general vector representations with group representations,
we can represent a data point with a group element on a group structure.
To enable practical training, we adopted the Lie algebra parameterization
and converted the learning problem on groups into the learning in linear
spaces, which enables general optimization and sampling.
Based on the parameterization, we introduced two simple commutative
decomposition constraints for
encouraging disentanglement, which are naturally derived from
the one-parameter subgroup decomposition assumption and the Hessian
Penalty assumption.
To instantiate the group-based learning method, we introduced a
variant of VAE called bottleneck-VAE, derived from a new lower bound
of the data log-likelihood.
We then proposed our (Commutative) Lie Group VAE by simply
integrating an exponential mapping
layer (with commutative decomposition constraints)
into the decoder of the bottleneck-VAE.
The proposed model achieved state-of-the-art performance in
unsupervised disentanglement learning without adopting other regular
constraints like statistical independence.
Our proposed method is simple, elegant, and effective, and we believe
this model exhibits a new direction for learning disentangled representations.

\section*{Acknowledgement}
This work was supported in part by the Australian
Research Council under Projects DE180101438 and DP210101859.


\bibliography{example_paper}
\bibliographystyle{icml2021}

\end{document}